% SIAM SIURO Article Template
%\documentclass[final,onefignum,onetabnum]{siuro210301}
\documentclass{article}
\usepackage{lipsum}
\usepackage{amsfonts}
\usepackage{graphicx}
\usepackage{epstopdf}
\usepackage{algorithmic}
\usepackage{amsmath, amsthm, amssymb}
\usepackage{hyperref}
\usepackage{authblk}

\ifpdf
  \DeclareGraphicsExtensions{.eps,.pdf,.png,.jpg}
\else
  \DeclareGraphicsExtensions{.eps}
\fi

% Prevent itemized lists from running into the left margin inside theorems and proofs
\usepackage{enumitem}
\setlist[enumerate]{leftmargin=.5in}
\setlist[itemize]{leftmargin=.5in}

% Add a serial/Oxford comma by default.

% Used for creating new theorem and remark environments
%\newsiamremark{remark}{Remark}
%\newsiamremark{hypothesis}{Hypothesis}
%\crefname{hypothesis}{Hypothesis}{Hypotheses}
%\newsiamthm{claim}{Claim}

% Initialize lemma and theorem environments
\newtheorem{lemma}{Lemma}
\newtheorem{theorem}{Theorem}

\title{Modeling COVID-19 spread in the USA using metapopulation SIR models coupled with graph convolutional neural networks }
\author[a,b]{Petr Kisselev\footnote{peter.kisselev@gmail.com}}
\author[b]{Padmanabhan Seshaiyer\footnote{pseshaiy@gmu.edu}}

\affil[a]{\small Thomas Jefferson High School for Science \& Technology, Alexandria, VA}
\affil[b]{\small Department of Mathematical Sciences, George Mason University, Fairfax, VA}

%\correspondingauthor{\textsuperscript{1}email:peter.kisselev@gmail.com, \textusuperscript{2}email:pseshaiy@gmu.edu}

\newcommand{\bit}{\begin{itemize}}
\newcommand{\eit}{\end{itemize}}
\newcommand{\ben}{\begin{enumerate}}
\newcommand{\een}{\end{enumerate}}
\newcommand{\beq}{\begin{equation}}
\newcommand{\eeq}{\end{equation}}
\newcommand{\beqa}{\begin{eqnarray*}}
\newcommand{\eeqa}{\end{eqnarray*}}
\newcommand{\bc}{\begin{center}}
\newcommand{\ec}{\end{center}}

\newcommand{\ds}{\displaystyle}
\newcommand{\bEps}{{\cal E}}
\newcommand{\cR}{{\cal R}}

\begin{document}

\maketitle
\section*{Abstract}
\hfill

Graph convolutional neural networks (GCNs) have shown tremendous promise in addressing data-intensive challenges in recent years. In particular, some attempts have been made to improve predictions of Susceptible-Infected-Recovered (SIR) models by incorporating human mobility between metapopulations and using graph approaches to estimate corresponding hyperparameters. Recently, researchers have found that a hybrid GCN-SIR approach outperformed existing methodologies when used on the data collected on a precinct level in Japan. In our work, we extend this approach to data collected from the continental US, adjusting for the differing mobility patterns and varying policy responses. We also develop the strategy for real-time continuous estimation of the reproduction number and study the accuracy of model predictions for the overall population as well as individual states. Strengths and limitations of the GCN-SIR approach are discussed as a potential candidate for modeling disease dynamics.

\section{Introduction}
\hfill

Compartment models are widely used in the modeling community to describe the spread of infectious diseases \cite{background}. Standard SIR model considers three compartments: $S(t)$ - the number of susceptible individuals, $I(t)$ - the number of infected and $R(t)$-the number of recovered or deceased at time $t$. Representing the infection rate parameter by $\beta$ and removal rate parameter by  $\gamma$, the following system of equations is derived \cite{stolerman}:
\beq
\left\{
\begin{array}{l}
\vspace{0.1in}
\ds \frac{d S(t)}{dt} = -\beta \frac{S(t) I(t)}{P}\\
\vspace{0.1in}
\ds \frac{d I(t)}{dt} = \beta \frac{S(t)I(t)}{P} - \gamma I(t)\\
\ds \frac{dR(t)}{dt} =  \gamma I(t)\
\end{array}
\right.
\label{standard}
\eeq
where $P=S(t)+I(t)+R(t)$ is the total population that is assumed to remain constant. 

There are many modifications of the basic SIR model that have been proposed in the literature \cite{review}. For example, the SEIR variation of the model includes another compartment for exposed individuals, while the SIRV variation incorporates a compartment vaccinated populations. It is also possible to account for individuals who end up succumbing to the disease and dying, as done in the SIRD variation and others like it. While these models are very intuitive and mathematically tractable, their predictive properties are highly dependent on the accuracy of the modeling parameters $\beta$, $\gamma$, and other parameters for the additional compartments . In fact, these basic parameters have been found to vary greatly between subpopulations for a variety of diseases including COVID-19 \cite{bertozzi, art1, art2, art3}. This realization has motivated several groups to develop so-called ``metapopulation SIR'' or ``SIR-network'' models which tackle this problem by splitting the overall population into a number of subpopulations and allowing for variable infection and recovery rate parameters across these newly created ``metapopulations''\cite{cao, stolerman}. These parameters may account for the change in mobilities between different compartments and differences in vaccination policies in different regions, among other conditions. 

The caveat of this approach is the increase in computational complexity, the need to estimate a larger set of parameters, and work with higher dimensional data. In this work, we explore the benefits of coupling the metapopulation(network) SIR model with the graph convolutional neural network (GCN) methodology which has enjoyed significant advances and popularity in recent years. One advantage of GCN parameter estimation compared to that of standard convolutional neural nets is its applicability to an arbitrary data structure as long as it may be represented by a graph. It is also better able to draw on geographical relationships. Several authors explored the GCN-SIR coupling, see review provided in \cite{review}.

Drawing motivation from the work of Cao et al \cite{cao}, we use GCNs to dynamically fit the parameters of the metapopulation SIR model using a given time series of data. Similar to \cite{cao}, we focus on making predictions on the spread of COVID-19 with the GCN framework given different ``horizons,'' and compare these forecasts with the standard SIR model. There are several distinctions in the approach presented in this work in comparison to the ``mepoGNN'' model of \cite{cao}. In particular: (1) the mepoGNN model was trained on Japanese precinct data, and our goal in this study is to model the spread of COVID-19 in the United States;  (2) in applying the original model to US data, we found the need to change several modeling assumptions including choosing a different form of the mobility parameter; (3) we analyzed the predictions of the model on specific subpopulations and estimated the overall reproduction number based on the metapopulation model.

\section{Graph Convolutional Neural Networks}
\hfill

Graph convolutional neural networks (GCNs) are an emerging technique that have shown promise in several areas. Here, we will be applying them to the prediction of daily infections of COVID data. Fundamentally, GCNs are very different from traditional neural networks as they act an expansion and modification of the core premise of how neural networks operate. Specifically, graph neural networks work to replace the structure of the data on which a traditional neural networks performs transformations, instead, applying transformations to data which represents a graph, utilizing relationships within this structure to enhance predictions and available context.

The traditional neural network, a model inspired by functioning of a biological brain, is a computational model designed to perform tasks such as classification, regression, and pattern recognition. It's structure consists of interconnected nodes, or ``neurons," which are then organized into layers: an input layer, one or more hidden layers, and an output layer. Each neuron takes in information from the previous layer and outputs a weighted sum of the inputs that involves parameters including weights and biases that are constantly learned. This is followed by the application of an activation function to introduce non-linearity and enable the model to learn complex patterns. Activation functions, such as the sigmoid or ReLU (Rectified Linear Unit), play a critical role in determining the network's ability to capture intricate relationships within data. Inputs to the network represent features of the problem being modeled, while the output corresponds to predictions or classifications. With the recent developments in computational power, neural networks have flourished, gaining significant popularity and enabling breakthroughs in many fields. An example of how neural networks have been used is for image recognition: the input is a picture containing either a cat or a dog and the model is trained to determine which of the two is depicted in the image. Figure \ref{fig:nn-structure} is a schematic representation of a standard neural network, illustrating the flow of information and transformations through its layers.

\begin{figure}
{\centering \includegraphics[width=\linewidth]{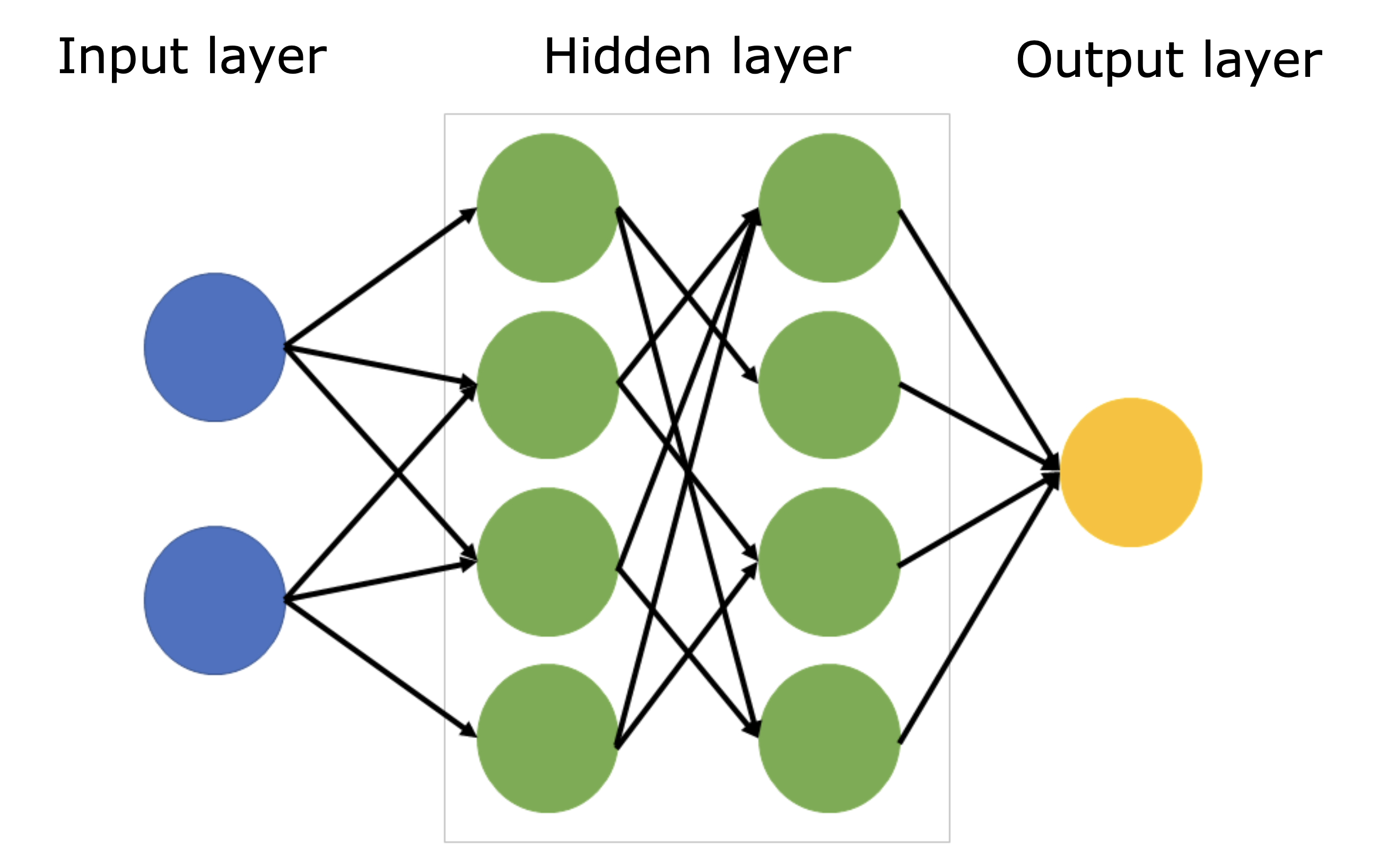}}
\label{fig:nn-structure}
\caption{Structure of a traditional neural network}
\end{figure}

The structure of the neural network that has been described can be enhanced with graph theory to model disease dynamics of metapopulations through SIR models (\ref{standard}). The purpose of coupling the metapopulation SIR model to a convolutional neural network is to achieve better accuracy in estimating hyperparameters by taking into account communication/mobility of sub-populations between different regions. This is accomplished by the mobility parameters assigned to the edges of a graph. More rigorously, graph neural network models may be defined as follows. Let $G$ be defined as the graph data such that $G=(V, \bEps)$. Where $V$ is defined to represent set of nodes comprised of $|V|=N$ nodes. Similarly, we let $\bEps$  be such that $\bEps \subseteq V \times V$. Here it will be used to store connection data between the nodes. The features may also be represented by the matrix $\textbf{X}=\{\textbf{x}_1, \textbf{x}_2,..., \textbf{x}_N\}^T \in \mathbb{R}^{N \times D}$, where the feature vector $\textbf{x}_i$ is associated with node $v_i$. Here, $D$ is used to denote dimension of the feature. By convention, we define an adjacency matrix for $G$ as $\textbf{A} \subseteq R^{N \times N}$, where $\textbf{A}_{ij} = 1$ for existing edges and $\textbf{A}_{ij} = 0$ otherwise. Figure \ref{fig:gnn-structure} is an illustration of embedding a graphical representation within the neural network framework that was described earlier. Specifically, the figure shows that the convolution is applied on a node-by-node basis, with the appropriate weights and biases.

\begin{figure}
{\centering \includegraphics[width=\linewidth]{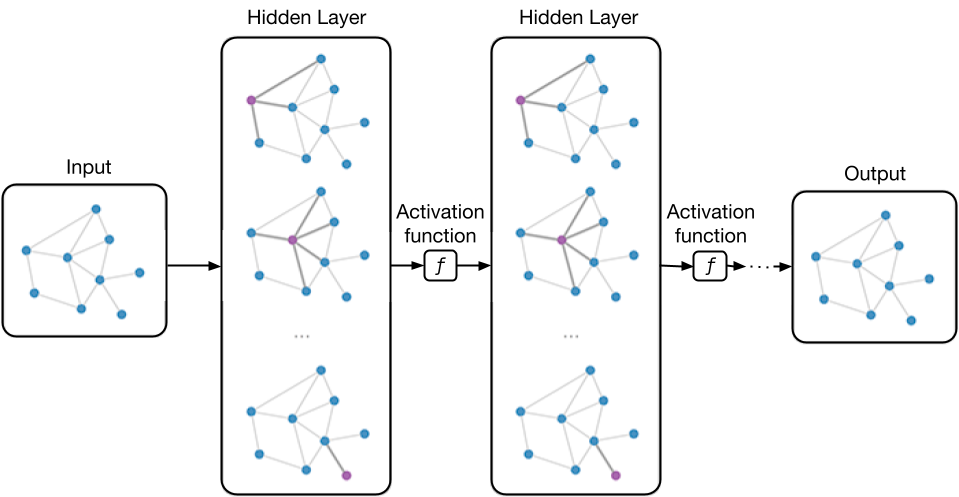}}
\label{fig:gnn-structure}
\caption{Structure of a GCN}
\end{figure}

In this paper we will be focusing on the subset of graph learning focused on node-level tasks. In other words, the GNN framework will be used to predict properties associated with individual nodes.  As with any other neural network model, it will be necessary to train it using a subset of nodes with known properties, or the training set. This training set of data will be denoted as  ${\cal V}_L$. The trained model will then be used to forecast the properties of unknown nodes from a separate testing set of data. The aforementioned training can be represented by the minimization of the following loss function:

\beq
{\cal L}(\textit{f}_\theta (G)) =\sum_{ v_i \in {\cal V}_L} \ell(\textit{f}_\theta (\textbf{X}, \textbf{A})_i ; y_i )
\label{eq: loss}
\eeq

Where $\theta$ is a vector containing the parameters of the model. The function $\textit{f}_\theta(\textbf{X}, \textbf{A})$ is designated to forecast property values for each node, where $y_i$ is defined to represent the true state of the node $v_i$. The difference between the predicted and true properties ($\textit{f}_\theta(\cdot, \cdot)_i$ and $y_i$ respectively) is quantified using a loss function $\ell (\cdot, \cdot)$. Examples of loss functions that can be used include RMSE (Root Mean Square Error), MAE(Mean Absolute Error), smooth $L_1$ loss, and others. 

\section{Model description}
\hfill\break
Network SIR models for a total of $M$ subpopulations typically have the form \cite{stolerman}:

\beq
\left\{
\begin{array}{ll}
\vspace{.1in}
\ds \frac{dS_n}{dt} &=\ds  - S_n \sum^M_{m=1}{\beta_{mn}I_m}\\
\vspace{.1in}
\ds \frac{dI_n}{dt} &= \ds S_n \sum^M_{m=1}{\beta_{mn}I_m} - \gamma_n I_n\\
\ds \frac{dR_n}{dt} &= \gamma_n I_n
\end{array}
\right.
\label{model0}
\eeq
where $\beta_{mn}$ are the corresponding interaction parameters accounting for the movements between subpopulations. Cao et al \cite{cao} chose the form $\beta_{mn}=\beta_n\left(\frac{h_{mn}}{P_m}+\frac{h_{nm}}{P_n}\right)$, where $h_{mn}$ modeled mobility between regions $m$ and $n$, and $P_m,P_n$ represent the populations of the regions, respectively. We denote the total population of the country by $\ds P=\sum^M_{m=1}P_m$. This leads to the following form of the metapopulation SIR model:
\beq
\left\{
\begin{array}{ll}
\vspace{.1in}
\ds \frac{dS_n}{dt} &=\ds  - S_n \beta_n\sum^M_{m=1}{\alpha_{mn}I_m}\\
\vspace{.1in}
\ds \frac{dI_n}{dt} &= \ds S_n \beta_n \sum^M_{m=1}{\alpha_{mn}I_m} - \gamma_n I_n\\
\ds \frac{dR_n}{dt} &= \gamma_n I_n
\end{array}
\right.
\label{model}
\eeq

where $\alpha_{mn}=\frac{h_{mn}}{P_m}+\frac{h_{nm}}{P_n}$ are the interaction coefficients modeling mobility. In their work, they argued that following form of mobility was best suited for this task: $h_{mn}=\alpha \frac{P_n P_m}{(dist_{mn})^d +\epsilon} $, where $\alpha, \epsilon$ and $d$ are the training hyperparameters and $dist_{mn}$ is the distance between the regions. 

By representing the SIR model via a graph with mobilities $\alpha_{mn}$ assigned to the edges and keeping $\gamma_n$ to represent recovery/immunity rate, a graph convolutional neural network was trained using the somewhat complex architecture that has been claimed to be the first hybrid model that couples the metapopulation SIR model with spatiotemporal graph neural networks. The code shared by the authors has been used as a basis for our investigation, where we implemented the GCN structure as shown in Figure \ref{fig:architecture}. This architecture consists of three main sections: the graph learning module, the metapopulation SIR module, and the spatio-temporal module. For our model we chose to use the adaptive version of the model proposed by Cao et al, where the mobilities are initialized statically from an estimation based on population and distance, and then learned.  The spatio-temporal module is comprised of a combination of spatio-temporal (ST) layers. Each layer is created through combining a graph convolutional neural network layer with a gated temporal convolutional network layer. The results from the spatio temporal module are then passed through two fully connected layers with a  ReLU (Rectified Linear) activation function and a Sigmoid activation function respectively, producing the predicted $\beta$ and $\gamma$ parameters \cite{literate}. These predicted parameters are fed into the final, metapopulation SIR module (see Figure \ref{fig:architecture}) which is detailed above, which produces the final daily infection prediction from the model.

\begin{figure}
{\centering \includegraphics[width=\linewidth]{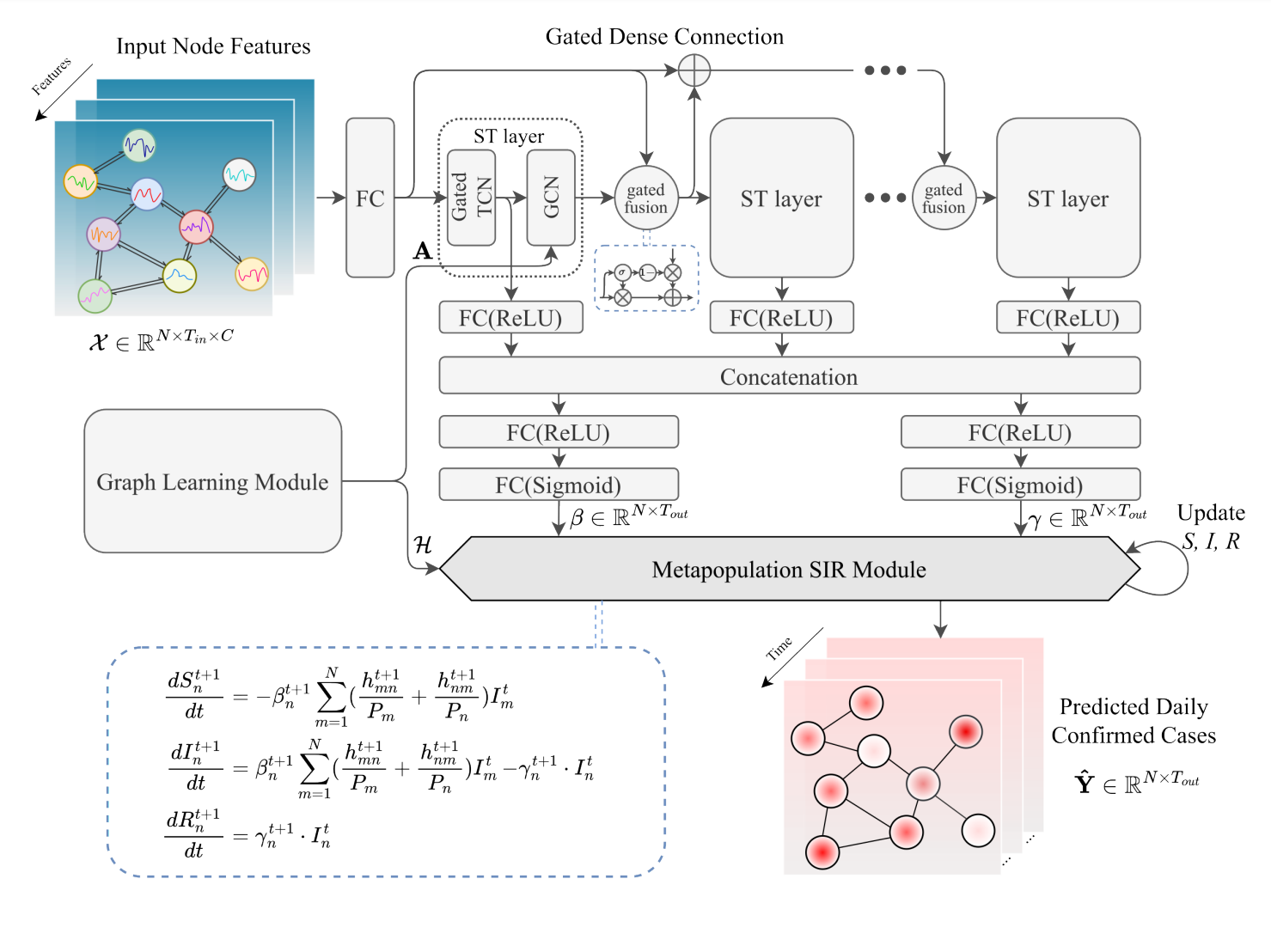}}
\label{fig:architecture}
\caption{Model architecture used in \cite{cao}}
\end{figure}

\section{Extending the model to US data}
\hfill

In extending the original model to US data, we faced several challenges.

First, there was a lack of an easily accessible source for recovery data. In order to gather the data necessary it was necessary to take several key steps. We started by sourcing data on daily infections from a dataset downstream from official data by the Johns Hopkins Center for Systems Science and Engineering (CSSE). This provided confirmed infection data on a county level which was binned up to the state level to alleviate computational complexity concerns. Additionally, the date standard format was reinterpreted as an integer day offset from the first day found in the dataset. We also made the decision to exclude US territories, the District of Columbia, Alaska, and Hawaii from the training and testing of our model as this would introduce additional complexity into the geospatial relationships without benefitting the predictions made significantly. Several supplementary datasets were additionally used, such as data on the physical locations of state centers and US state populations. We were, however, unsuccessful in obtaining a dataset that could sufficiently detail recovery data in the United States. This was a challenge as the model necessitated such data as an input, data which was available in Japan but not for the US. To overcome this issue, we generated recovery parameter by numerically solving System \ref{model} using an Euler approximation from an ad-hoc $\gamma$ value and the known real-world infection data which we sourced. We believe that this solution is effective because while the recovery data is somewhat approximate, the model's performance in predicting infections is still compared against the ground truth. 

Second, compared to Japan, United States complex interaction between the regions in the United States is more complex, so we had to account for effect of state-level policies on mobility estimation

The mobility value approximation has been improved with an additional term in the formula to account for flight travel:
\beq
h_{mn}=\alpha \frac{P_n P_m}{(dist_{mn})^d +\epsilon} + \beta \max(P_n,P_m) (1-\delta_{mn})
\label{mobility}
\eeq
Namely, the last term allows to have significant mobility between densely populated states even if the distance between them is large. Multiplying by the Kronecker delta function makes sure this term collapses to zero in the simple SIR model case when $M=1$.  

The following consistency analysis has been carried out. As a result of the model architecture, the mobility values are not normalized and so it is necessary to balance them out using the following:
\begin{lemma}
Metapopulation model \eqref{model} is consistent with the standard SIR model if and only if $2\alpha P^2 = \epsilon$.
\label{lemma1}
\end{lemma}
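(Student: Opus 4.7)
The plan is to specialize the metapopulation system \eqref{model} to the single-region case $M=1$ and match the resulting ODE coefficient against the standard SIR system \eqref{standard}. Since the condition to be proved is a single algebraic equation, I expect the argument to reduce to a direct computation, and the ``if and only if'' will follow from the fact that the coefficient comparison gives one nontrivial equation in the parameters $\alpha$ and $\epsilon$.

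The key steps, in order, are as follows. First, I would set $M=1$ in \eqref{model}, so that $S_n,I_n,R_n$ collapse to $S,I,R$ with $S+I+R=P$, and the only interaction coefficient that survives is $\alpha_{11}$. Second, I would evaluate the mobility \eqref{mobility} with $m=n=1$: the factor $1-\delta_{mn}$ kills the long-range term, and the self-distance $dist_{11}=0$ collapses the first term to $\alpha P^2/\epsilon$, giving $h_{11}=\alpha P^2/\epsilon$. Third, I would plug this into $\alpha_{11}=h_{11}/P + h_{11}/P = 2h_{11}/P = 2\alpha P/\epsilon$. Fourth, I would substitute into the $S$-equation of \eqref{model}, obtaining
\begin{equation*}
\frac{dS}{dt} = -\,\beta_1\,S\,\alpha_{11}\,I = -\,\beta_1\,\frac{2\alpha P}{\epsilon}\,S\,I .
\end{equation*}
Consistency with the standard form $-\beta S I/P$ in \eqref{standard} (under the natural identification $\beta_1=\beta$) is then equivalent to the single scalar equation $2\alpha P/\epsilon = 1/P$, i.e.\ $2\alpha P^2=\epsilon$. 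The same check for the $I$- and $R$-equations is automatic once the $S$-equation matches, because the removal term $\gamma_1 I$ in \eqref{model} already matches $\gamma I$ in \eqref{standard} with $\gamma_1=\gamma$.

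The main obstacle is essentially bookkeeping rather than mathematics: I need to argue cleanly that the identifications $\beta_1=\beta$ and $\gamma_1=\gamma$ are the ``right'' ones to call the two models consistent, so that the remaining freedom really does reduce to a single equation. I would address this by noting that in the $M=1$ reduction the only free multiplicative constant between the two infection terms is $\alpha_{11}\cdot P$, and the ``only if'' direction follows because any mismatch between $\alpha_{11}$ and $1/P$ would force $\beta_1\ne\beta$, contradicting the requirement that the metapopulation infection rate coincide with the standard SIR rate for all admissible choices of $\beta$. The ``if'' direction is then immediate by substitution.
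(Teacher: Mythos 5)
Your proposal is correct and follows essentially the same route as the paper's proof: specialize to $M=1$, compute $h_{11}=\alpha P^2/\epsilon$ (with the flight-travel term killed by the Kronecker delta, which the paper notes in the surrounding text), obtain $\alpha_{11}=2\alpha P/\epsilon$, and match the infection term against $\beta SI/P$ to get $2\alpha P^2=\epsilon$. Your explicit handling of the ``only if'' direction and the identifications $\beta_1=\beta$, $\gamma_1=\gamma$ is slightly more careful bookkeeping than the paper provides, but it is the same argument.
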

\begin{proof}
Taking the limiting case of $M=1$ subpopulation and denoting $h_{mn}=h_{nm}=h$, $\beta_n=\beta, \forall n=1,\ldots, M$, we obtain $\alpha_{nn}=\ds \frac{2h}{P}$, where $P=P_n=P_m$ for all $n,m$. It is clear that since $h_{mn}=\ds \alpha \frac{P_n P_m}{(dist_{mn})^d +\epsilon} $, $h = \ds \alpha \frac{P^2}{\epsilon}$, so $\ds \alpha_{nn}=\frac{2\alpha P}{\epsilon}$. Hence 
$ S_n \beta_n\sum^M_{m=1}{\alpha_{mn}I_m} = \ds \beta \frac{2\alpha P}{\epsilon} S_n I_m$ which is equal to $\beta SI/P$ under the condition that $2\alpha P^2 = \epsilon$
\end{proof}
This allows to reduce the number of free parameters to $\alpha$ and $d$, simplifying the training of the mobility parameters. Optimized values of these parameters were chosen as follows:
\begin{equation*}
\begin{array}{l}
\alpha:  1.12 \times 10^{-6}\\
d:  1.73 \mbox{ (distance decay factor)}\\
%\bEps: 13\\
\end{array}
\label{hyper-mob}
\end{equation*}

Parameter $\epsilon$ was fixed in accordance with Lemma \ref{lemma1}.  The optimized GCN hyperparameter values and training details are provided below:
\begin{equation*}
\begin{array}{ll}
 \mbox{ Learning Rate}:  2.5 \times 10^{-5}\\
 \mbox{ Optimizer}:  \mbox {Adam}\\
 \mbox{ Loss Function}:  \mbox {MAE (Mean Absolute Error)}\\
 \mbox{ Epochs}:  319 \\
\end{array}
\label{hyper-gcn}
\end{equation*}

It must be noted that some run-to-run variance is expected in this model due to the random nature of weight initialization in the GCN training. Additionally, hardware differences may also slightly change results as using the GPU, CPU, or a dedicated accelerator will lead to having minor differences in the driver and PyTorch backend implementations.

\section{Real-time tracking of the reproduction number}
\hfill

One of the critical considerations that is important to keep in mind when modeling COVID-19, as well as other infectious diseases, is the ability of the model to predict its spread. The threshold parameter  ${\cR}_0$, such that the disease free equilibrium (DFE) is asymptotically stable for ${\cal R}_0<1$ and unstable otherwise, is called the basic reproduction number. A more granular parameter accounting for the changes in population susceptibility, is the so-called effective reproduction number, denoted as ${\cR}_t={\cR}_0\frac{S_t}{N}$. Both measures are important tools for the mathematical validation of epidemiological models, as well as for practical considerations. Challenges and misconceptions in estimating these metrics are well documented \cite{gostic, failure}. 

Local parameters of the evolving epidemic change based on mobility patterns, population density and policy measures, the complexity of which creates significant difficulties for decision-making. The need for accurate continuous real-time prediction of the reproduction numbers in light of this variability has long been recognized and documented in the literature \cite{shapiro}. Some of the proposed real-time estimation methods include the adaptive SIR methodolgy (ASIR, \cite{shapiro}), where ${\cR}_0$ is based on a sliding time window approach, and the introduction of an ``effective contact rate'' to capture incidence dynamics over a given network \cite{Romanescu}. We argue that the graph neural network approach chosen in this work has a natural capability to capture the evolution of modeling parameters in real-time, and hence it may provide an opportunity to improve upon prior ${\cR}_0$ predictions. 

As noted in \cite{Chavez}, there is a natural connection between ${\cR}_0$ and ${\cR}_t$ when it comes to studying SIR population models. Namely, as we look at the equation for the infected population in the standard SIR model, 
$$
\frac{dI}{dt} = \beta S \frac{I}{N} - \gamma  I = \gamma I (\frac{\beta}{\gamma} \frac{S}{N} -1)=\gamma  ({\cR}_0 \frac{S}{N}-1)I = \gamma (\cR_t-1)I.
$$
The role of the bifurcation parameter ${\cR}_t$ is clear. It separates stable behavior of the disease-free equilibrium $I^*=0$, for which $\frac{dI}{dt}<1$ (for $\cR_t<1$) from the unstable and possibly endemic equilibrium when $\cR_t>1$.

For the metapopulation SIR model considered in this paper, we can take a similar approach, following the framework described in \cite{Driessche}. Namely, 
\begin{theorem}
Basic reproduction number for model \ref{model} is given by
$\cR_0 = \rho(D A)$, where 
$A = \{A_{nm}\}=\{\alpha_{nm}\}$ is the mobility matrix and 
$D = \mbox{diag}(\frac{ \beta_1}{\gamma_1}, \ldots, \frac{ \beta_m}{\gamma_m})$  is the scaling matrix.
\end{theorem}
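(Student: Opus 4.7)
The plan is to apply the next-generation matrix framework of van den Driessche and Watmough \cite{Driessche}, which is the standard tool for extracting $\cR_0$ from compartmental models with multiple infected classes and fits the structure of \eqref{model} naturally.

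First, I would identify the disease-free equilibrium (DFE). Setting $I_n = R_n = 0$ for every $n$ and using the fact that each subpopulation total $S_n + I_n + R_n = P_n$ is conserved gives $S_n = P_n$. Since only the $I_n$ compartments are infected, the reproduction number depends only on the $M$-dimensional infected subsystem $(I_1, \ldots, I_M)$.

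Second, I would decompose this subsystem in the standard way as $\dot I_n = \mathcal{F}_n - \mathcal{V}_n$, where $\mathcal{F}_n(S,I) = S_n \beta_n \sum_{m=1}^M \alpha_{mn} I_m$ records new infections and $\mathcal{V}_n(I) = \gamma_n I_n$ records removals. Linearizing at the DFE yields the transmission matrix with entries $F_{nm} = P_n \beta_n \alpha_{mn}$ and the diagonal transition matrix $V = \mathrm{diag}(\gamma_1, \ldots, \gamma_M)$, which is trivially invertible. By \cite{Driessche}, $\cR_0 = \rho(FV^{-1})$.

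Third, I would simplify $\rho(FV^{-1})$ to the stated form. The mobility matrix is symmetric, $\alpha_{mn} = \alpha_{nm}$, because $h_{mn}$ is symmetric in $m,n$ under the construction \eqref{mobility} and $\alpha_{mn} = h_{mn}/P_m + h_{nm}/P_n$. Hence $F$ factors as $\Delta A$ for a diagonal $\Delta$ carrying the $P_n \beta_n$ weights, so $FV^{-1} = \Delta A V^{-1}$. Invoking the spectral identity $\rho(XY) = \rho(YX)$ and the commutativity of diagonal matrices reduces this to $\rho(DA)$ with $D$ the appropriate diagonal combination of the $\beta_n / \gamma_n$, matching the claimed expression.

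The step I expect to require the most care is the bookkeeping of the population factors $P_n$ that enter $F$ through $S_n = P_n$ at the DFE. To land on the clean form $D = \mathrm{diag}(\beta_n/\gamma_n)$ stated in the theorem, either $A$ must be interpreted with the appropriate $P_n$ normalization absorbed in, or a diagonal similarity rescaling of the infected vector must be performed before reading off $D$. As a sanity check at the end of the proof, I would verify that the $M=1$ reduction (using Lemma \ref{lemma1}) collapses $\rho(DA)$ to the classical $\cR_0 = \beta/\gamma$, confirming consistency with the standard SIR model.
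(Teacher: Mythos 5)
Your proposal is correct and follows essentially the same route as the paper: the next-generation matrix framework of van den Driessche and Watmough applied at the DFE $S_n^*=P_n$, $I_n^*=0$, giving $F_{nm}=P_n\beta_n\alpha_{mn}$, $V=\mathrm{diag}(\gamma_1,\ldots,\gamma_M)$, and $\cR_0=\rho(FV^{-1})$, which is then recast as $\rho(DA)$. The $P_n$ bookkeeping you flag as the delicate step is a real issue in the paper itself: its proof actually concludes with $D=\mathrm{diag}\left(\frac{P_1\beta_1}{\gamma_1},\ldots,\frac{P_m\beta_m}{\gamma_m}\right)$ (the form also used later for the $M=1$ check $\alpha_{nn}=1/P$, $\cR_0=\beta/\gamma$), so the population factors must indeed be carried in $D$ (or absorbed into $A$), even though the theorem statement omits them.
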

\begin{proof}
It is easy to see that the Jacobian of this model, linearized around the Disease Free Equilbrium (DFE) $x^*$, can be represented as $DF(x^*)= F-V$, where $F_{nm}=\beta_n P_n \alpha_{nm}$ and $V = \mbox{diag}(\gamma_1, \ldots, \gamma_m)$. Here, ``$F - V$" refers to the \textit{next generation matrix}, where ``$F$" represents the inflow and ``$V$" represents the outflow, and the basic reproduction number is calculated as the maximum eigenvalue of the matrix ``$FV^{-1}$". We used the fact that the DFE is represented by $I_n^*=0,S^*_n=P_n$ for this model. As shown in \cite{Driessche}, the DFE is stable when $\rho(FV^{-1})<1$ under certain conditions on $F$ and $V$ that can be shown to hold in this case. Henceforth we arrive at the conclusion that for this model:
\beq
\begin{array}{l}
\vspace{.1in}
\cR_0 = \rho(D A), \mbox{ where } \\
\vspace{.1in}
A = \{A_{nm}\}=\{\alpha_{nm}\} \mbox{ is the mobility matrix, } \\
D = \mbox{diag}(\frac{ P_1\beta_1}{\gamma_1}, \ldots, \frac{ P_m \beta_m}{\gamma_m}) \mbox { is the scaling matrix}
\end{array}
\label{eqn: R0}
\eeq
which proves the result of this theorem.
\end{proof}
From Lemma \ref{lemma1}, we know that in the limiting case of $M=1$ we have to satisfy 

\[\ds \alpha_{nn}=\frac{2\alpha P}{\epsilon}=\displaystyle\frac{1}{P}, \] 

 so since $\ds D =P \frac{\beta}{\gamma}$ the reproduction number of the metapopulation model $\cR_0$ in this case converts to the well known result $\ds \cR_0= \frac{\beta}{\gamma}$.

This result provides a method for continuous evaluation of $\cR_0$ based on the evolving set of infection parameters estimated by the neural network. As the neural net learns and adjusts the underlying mobility and recovery rates, we can use this estimation to recover real-time reproduction number.

\section{Numerical results} 
\hfill

Results of our numerical experiments produced by applying the modified mepoGNN model to US data are presented next. In particular, US state center-center distances and state population data was obtained from Kaggle \cite{kaggle}. Confirmed US COVID-19 cases were collected from the Github repository maintained by The Center for Systems Science and Engineering at Johns Hopkins University \cite{JHU}.

%{\centering \includegraphics[width=0.8\linewidth]{}}
\begin{figure}
{\centering \includegraphics[width=0.5\linewidth]{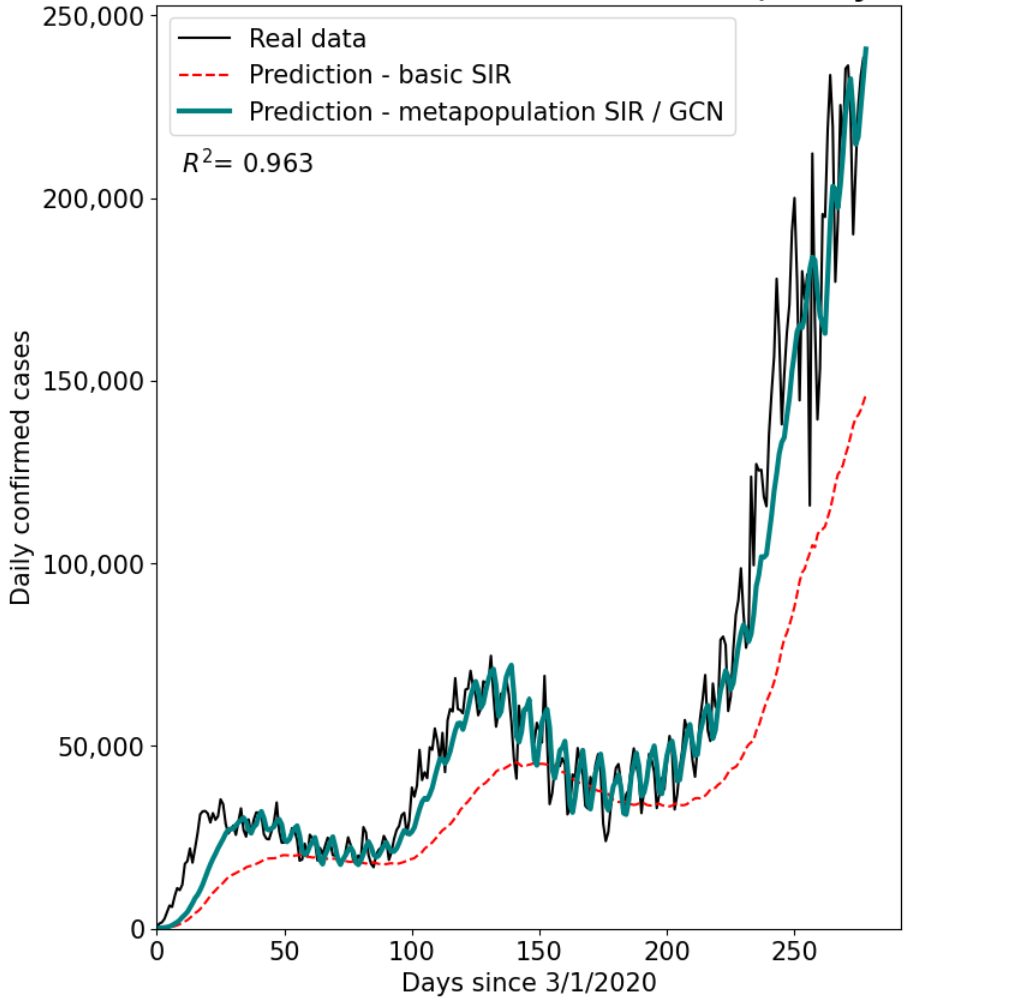}} 
\hfill
{\centering \includegraphics[width=0.5\linewidth]{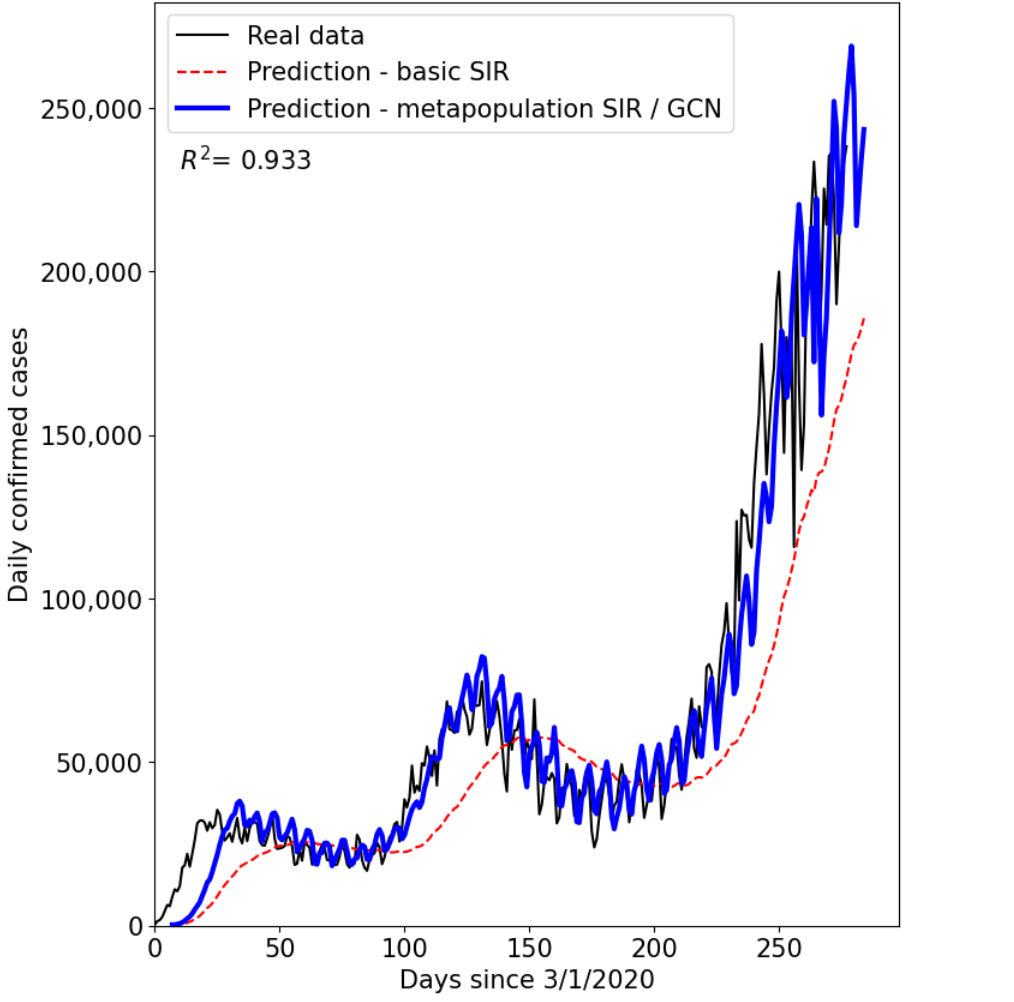}}
\label{fig: mepo1}
\caption{Metapopulation model prediction for US data based on real COVID-19 data, compared against the standard SIR model. Left panel: SIR-GCN Predictions on US COVID-19 data, 1 day horizon; Right Panel: SIR-GCN Predictions on US COVID-19 data, 7 day horizon}
\end{figure}

Figure \ref{fig: mepo1} shows predictions from the trained model based on a 1-day (left) and 7-day (right)  horizon, respectively. The GCN-SIR model is juxtaposed with the standard SIR model trained on the same dataset. It can be seen that by taking into account variability between regions, the model improves upon the prediction provided by traditional SIR approaches. In addition, it takes advantage of the neural network's learning capabilities to effectively train model parameters. 

In Figure \ref{fig: states} we look at the accuracy of the model predictions per state, choosing Virginia, New York, California, Ohio, Rhode Island and North Dakota as a sample containing large and small subpopulations. What we see is a strong model prediction for the densely populated states (New York, Virginia, California, Ohio) and a poor prediction for the less populous states (North Dakota, Rhode Island). 

\begin{figure}
{\centering \includegraphics[width=0.5\linewidth]{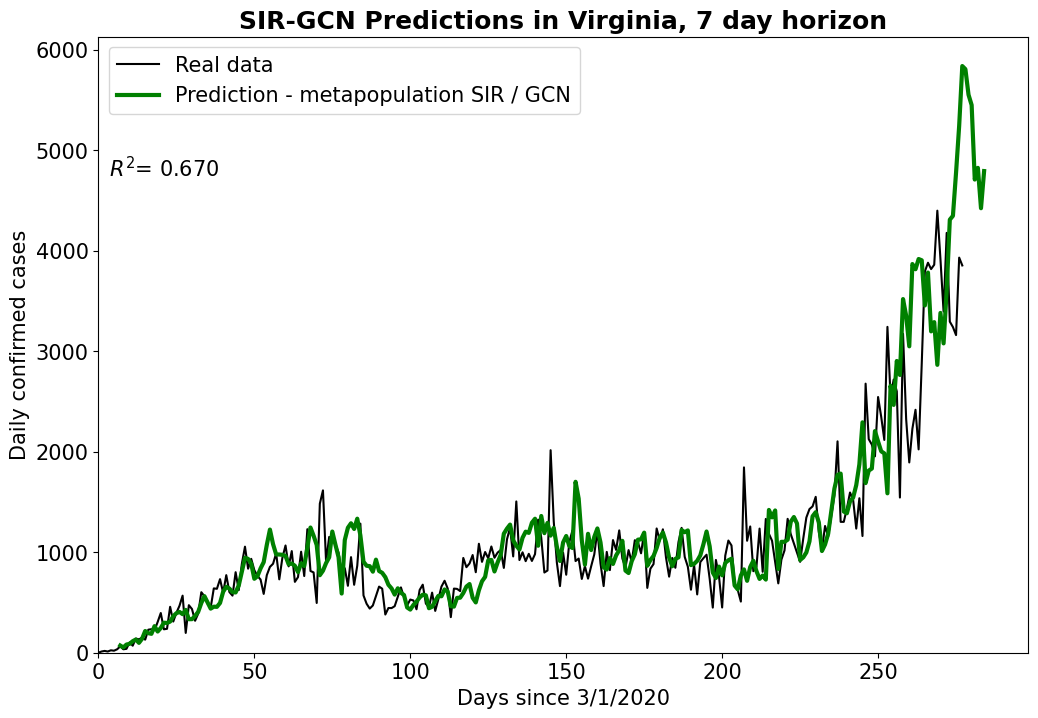}} 
\hfill
{\centering \includegraphics[width=0.5\linewidth]{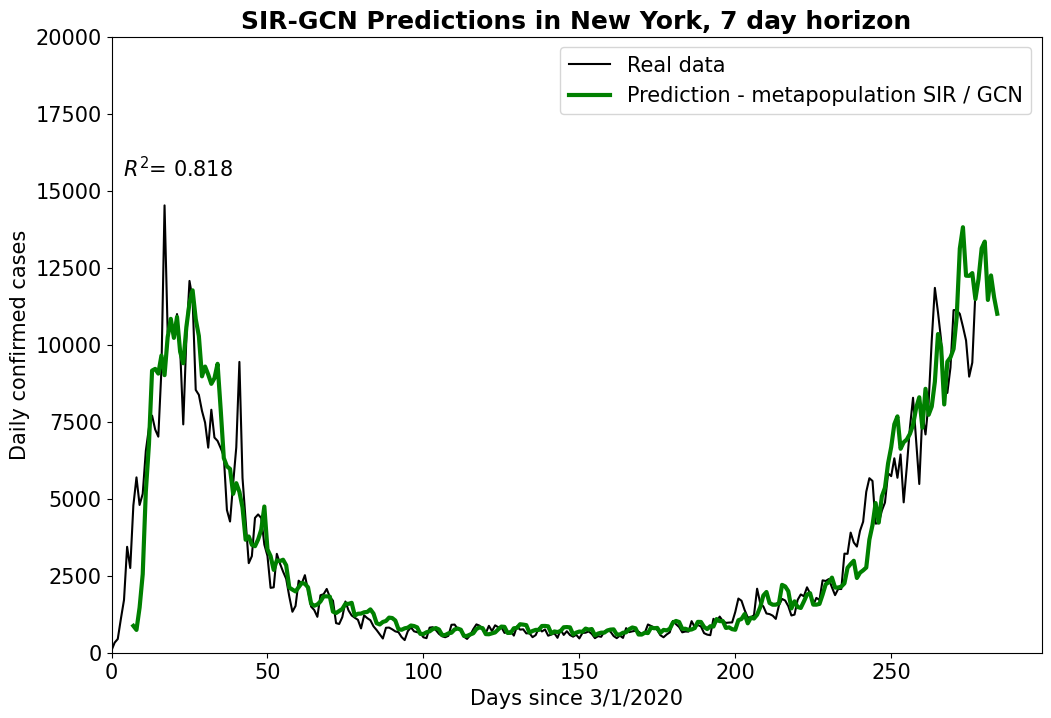}}\\
{\centering \includegraphics[width=0.5\linewidth]{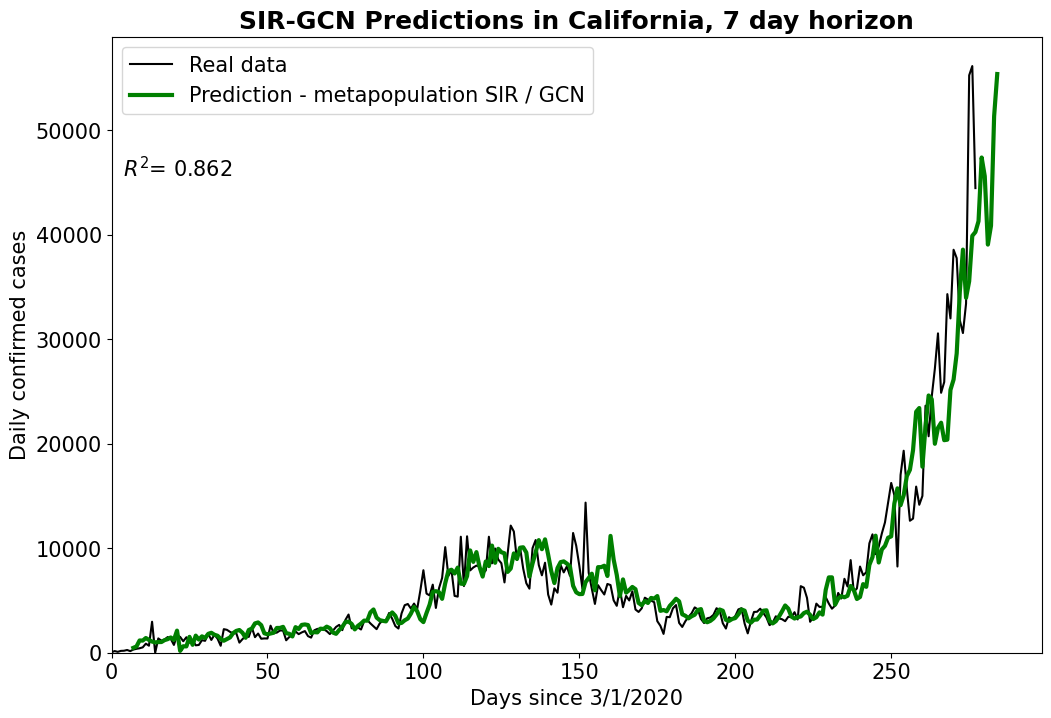}} 
\hfill
{\centering \includegraphics[width=0.5\linewidth]{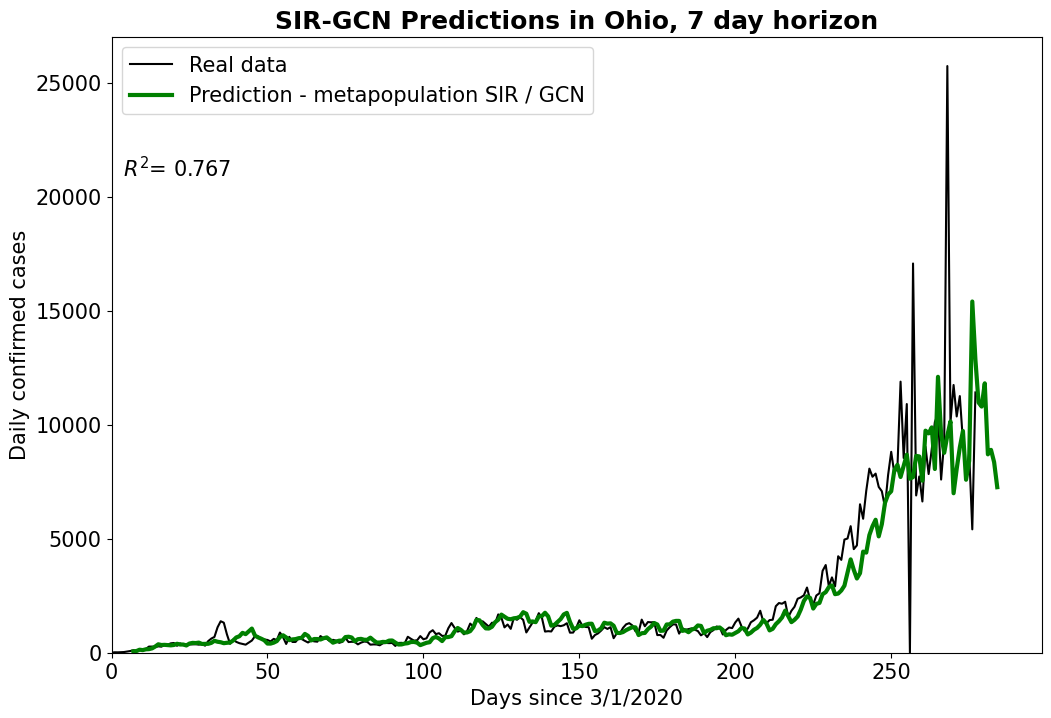}}\\
{\centering \includegraphics[width=0.5\linewidth]{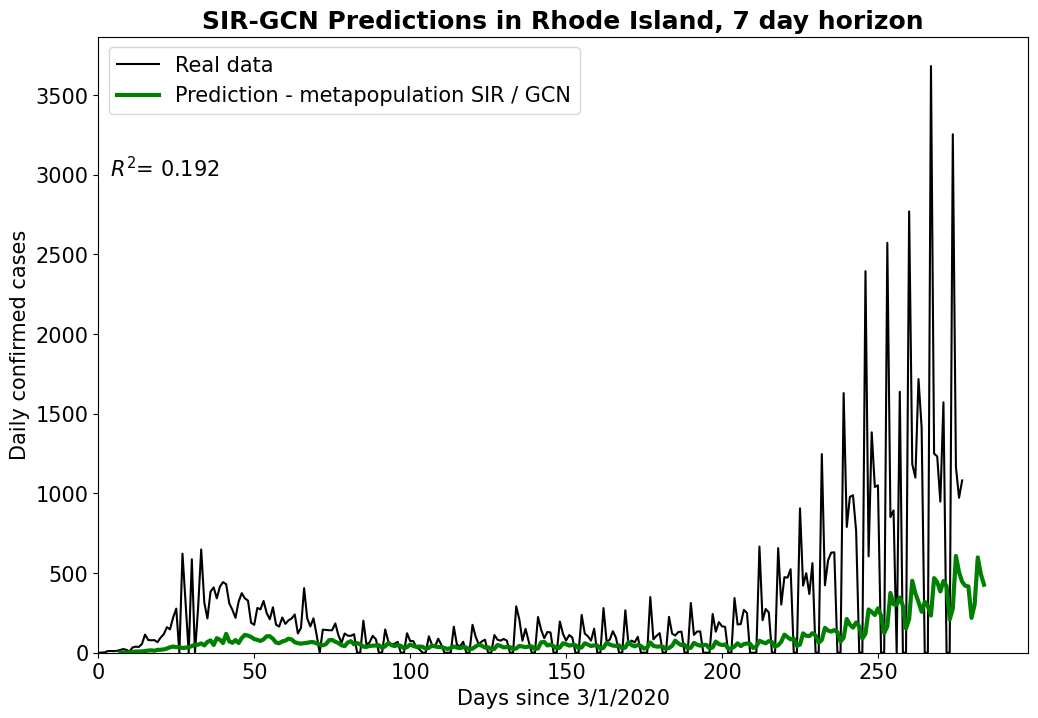}} 
\hfill
{\centering \includegraphics[width=0.5\linewidth]{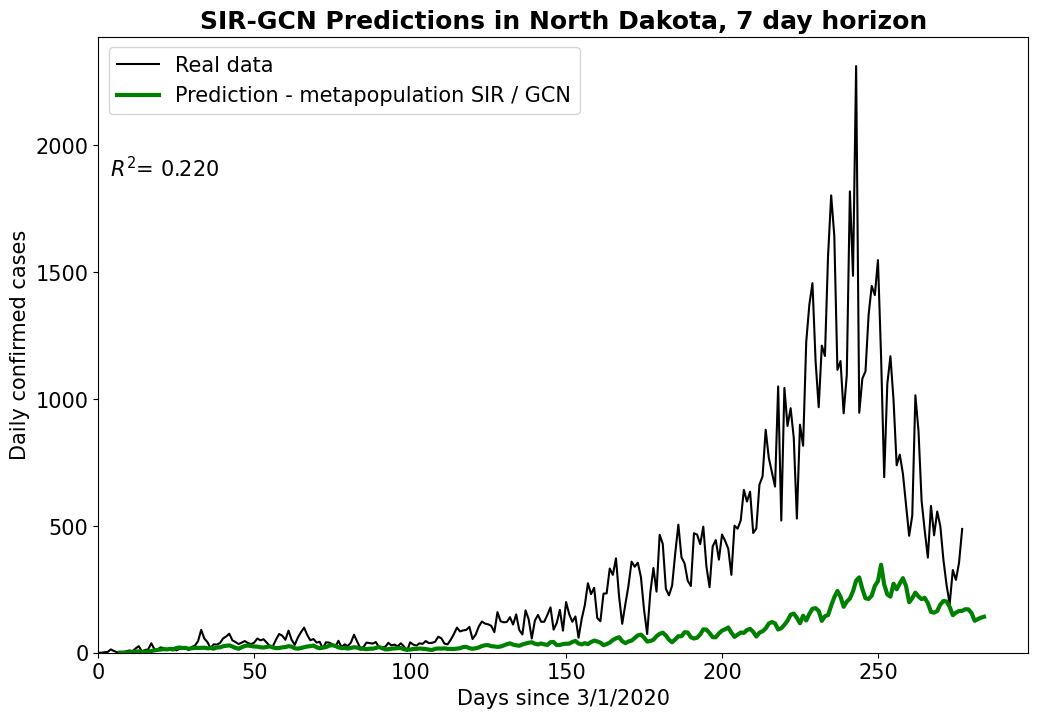}}
\label{fig: states}
\caption{Metapopulation model predictions for six US states.}
\end{figure}
To test this hypothesis, we plotted the correlation between the $R^2$ measure of fit and the corresponding state size in Figure \ref{fig: correlation}. To account for the large variation in state populations, the populations are log-scaled. The graph clearly shows moderate correlation, confirming that large-size subpopulations enjoy a more accurate prediction by the GCN-SIR model, which is to be expected given that the size plays a critical part in the optimization algorithm used in training the GCN. It is also clear that a majority of the state-level predictions have an $R^2>0.6$, which indicates reasonable performance overall.

\begin{figure}[h]
\centering{\includegraphics[width=0.7\linewidth]{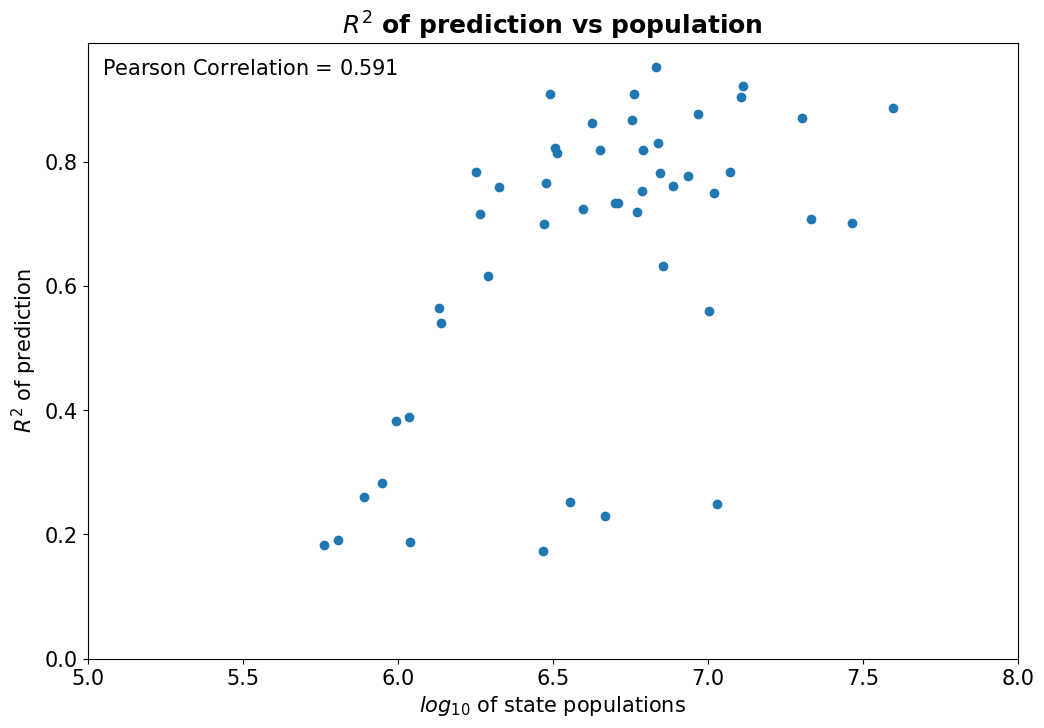}}
\caption{Correlation between the accuracy of fit for the metatpopulation SIR model and the size of the state for 48 contiguous Unites states.}
\label{fig: correlation}
\end{figure}

Next, we performed numerical experiments to continuously estimate the reproduction number (${\cR}_0$) of the entire metapopulation model using the estimate derived earlier in \eqref{eqn: R0}. The numerical results of this estimation, compared to the standard SIR reproduction number calculation, are given in Figure~\ref{fig: R0}. We can see the evolution of the ${\cR}_0$ value over the course of the pandemic, roughly capturing the ups and downs of the infection represented in Figure ~\ref{fig: mepo1}. The higher frequency oscillations visible in the graphs are due to the day-by-day variations in the neural net predictions and the real-world fact that people tend to travel more on certain days of the week than others.
\begin{figure}[h]
\centering {\includegraphics[width=0.7\linewidth]{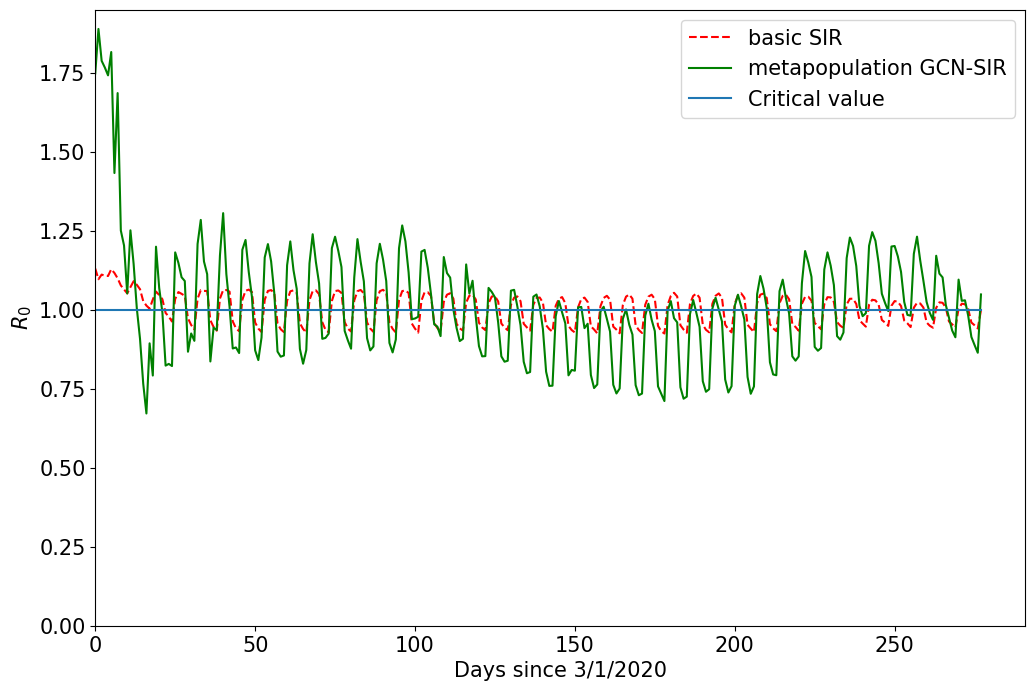}}
\caption{${\cR}_0$ number estimation using the GCN-SIR model.}
\label{fig: R0}
\end{figure}
It must be noted that while the overall ${\cR}_0$ values obtained by this approach were comparable to those available in the literature, state-level predictions were far less accurate. It indicates that a more granular county-based approach might be necessary to resolve state-level estimations. 

\newpage 

\section{Discussion and future work}
 \hfill\break
In this work we successfully adapted the hybrid GCN-SIR metapopulation model to predict the evolution of COVID-19 in the 48 continental states of the United States of America. In order to do so, we changed the formulations of the mobility parameters and derived the reproduction number formulation compatible with the standard SIR model. This allowed to streamline the process for training the hyperparameters to obtain a more robust implementation. 

Upon implementing these changes, we were able to obtain a high accuracy predictions for both 7-day and 1-day horizons for the entire United States. We noticed that individual state prediction accuracy was correlated with the state population, with densely populated states enjoying a better fit. 

Based on the neural network based approach to learn the infection rates in real time, we developed an alternative to the adaptive SIR method for estimating the reproduction numbers. Applying this approach to the entire US population, a reasonable prediction has been obtained, giving reason to believe that further improvements may yield an even better predictive capability that would be of significant interest to policy makers and medical practitioners. 

Overall, based on the results presented in this work, GCN-SIR metapopulation model seems to have a high potential for predicting  improving predictions of the spread of infectious diseases based on sufficient amount of training data. To our knowledge, this is the first application of this type of a GCN-SIR coupling to real COVID-19 data collected within the USA. 

While these preliminary results are encouraging, we believe that additional work needs to be performed to validate the model on other types of data. High correlation of the $R^2$ fitting parameter with the size of the subpopulations indicates that further improvements may be made to the choice of the mobility formulation, including learning mobility matrices in real time. Additional work could include building a better mobility estimation based on a more granular county-level data. All of our current attempts at a more granular model so far have run into issues with handling the sheer size of the model.

Future work will include deriving more accurate state-level reproduction number estimation and improvement of parameter estimation procedures. The possibility of additionally including the effect of local policy changes into the model is also one that we will consider in the future.\\

\noindent{\textbf{Acknowledgements}}

This work is partially supported by the National Science Foundation grant DMS-2230117.

\bibliographystyle{unsrt}
\bibliography{draft_GCN_arXiv}

\end{document}